\pdfoutput=1
\documentclass[11pt,a4paper]{article}
\usepackage[margin=25mm]{geometry} 
\usepackage{setspace}
\usepackage{graphicx}
\usepackage{mathtools,amssymb,amsthm,bm}
\allowdisplaybreaks
\usepackage{algorithm}
\usepackage[noEnd=true]{algpseudocodex}
\usepackage{subcaption}
\usepackage{cite}
\usepackage{tikz}
\usetikzlibrary{arrows.meta,positioning,decorations.pathreplacing,calc,shapes.multipart}
\usepackage{enumitem}
\setlist[itemize]{leftmargin=*}
\usepackage{xspace}
\usepackage{comment}
\newcommand{\ltwolone}{\ensuremath{\ell_2/\ell_1}\xspace}

\DeclareMathOperator*{\minimize}{minimize}
\usepackage[capitalize,nameinlink]{cleveref}
\crefname{figure}{Fig.}{Figs.}
\crefname{table}{Tab.}{Tabs.}
\crefname{equation}{}{}
\crefname{algorithm}{Alg.}{Algs.}

\usepackage{authblk}
\usepackage{wrapfig} 

\newcommand{\figconceptual}{} 

\newcommand{\tblarchcomp}{
\begin{table}[t]
    \centering
    \caption{Comparison between conventional LOP-\ltwolone and DU-LOP-\ltwolone methods.}
    \label{tbl:arch_comp}
    \begin{tabular}{lccc} \hline
        Feature            & LOP-\ltwolone~\cite{9729560} & \textbf{PS-DU-LOP-\ltwolone} & \textbf{P-GD-DU-LOP-\ltwolone} \\
        \hline
        Optimization       & Proximal Splitting           & Proximal Splitting           & Preconditioned GD              \\
        Parameter Tuning   & Manual                       & \textbf{Automatic}           & \textbf{Automatic}             \\
        Available Loss     & Prox-friendly                & Prox-friendly                & \textbf{Differentiable}        \\
        Backprop Stability & Unstable                     & \textbf{Stable}              & \textbf{Stable}                \\
        \hline
    \end{tabular}
\end{table}
}

\newcommand{\figcompflow}{
\begin{wrapfigure}{r}{0.4\textwidth}
    \centering
    \begin{tikzpicture}[
            node distance=0.5cm and 0.6cm,
            block/.style={draw, rectangle, minimum height=1.2em, minimum width=3.0em, rounded corners, align=center, font=\small},
            arrow/.style={-Stealth, thick}
        ]
        \node (in) [block] {$(x, \sigma)$};
        \node (pq) [block, right=of in] {$(p, q)$};
        \node (s) [block, right=of pq] {$s$};
        \node (out) [block, below=0.2cm of pq] {$\text{prox}_{\gamma \lambda \phi}(x, \sigma)$};

        \draw [arrow] (in) -- node[above, scale=1] {} (pq);
        \draw [arrow] (pq) -- node[above, scale=1] {\cref{eq:cubic}} (s);
        \draw [arrow] (in.south) |- (out.west);
        \draw [arrow] (s.south) |- (out.east);
    \end{tikzpicture}
    \caption{Computation flow of $\text{prox}_{\phi}$.}
    \label{fig:comp_flow}
\end{wrapfigure}
}

\newcommand{\figstability}{
\begin{figure}[t]
    \centering
    \includegraphics[width=0.6\linewidth]{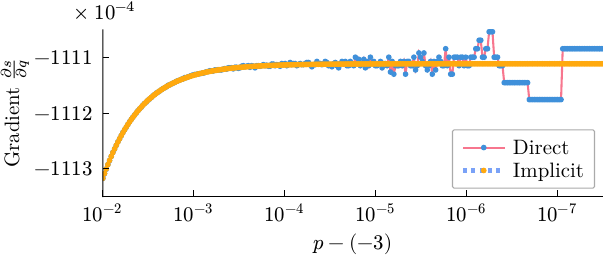}
    \caption{Numerical stability comparison of gradients $\partial s / \partial q$ near the multiple root boundary ($D \nearrow 0, q=-2$).}
    \label{fig:stability_plot}
\end{figure}
}

\newcommand{\figphasetransition}{
\begin{figure}[t]
    \centering
    \begin{subfigure}[b]{0.9\linewidth}
        \centering
        \includegraphics[width=1.0\linewidth]{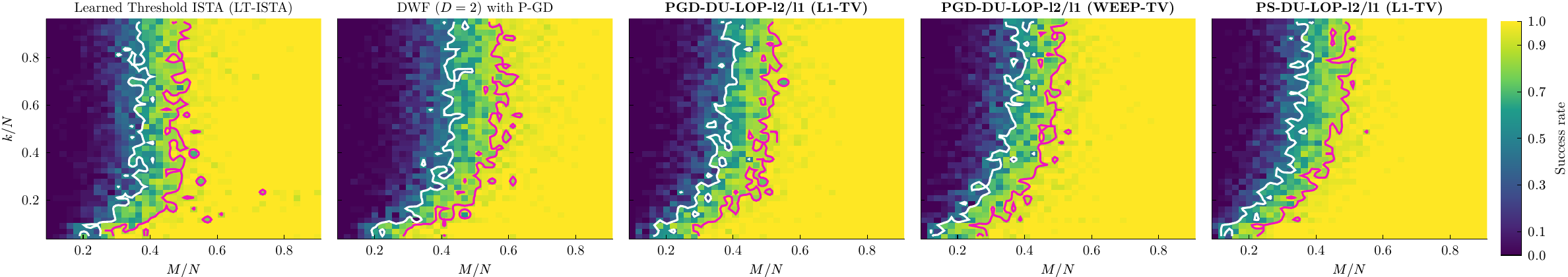}
        \caption{Phase transition comparison. White and magenta contour lines denote 50\% and 90\% success rates, respectively.}
        \label{fig:pt_block}
    \end{subfigure}
    \begin{subfigure}[b]{0.75\linewidth}
        \centering
        \includegraphics[width=1.0\linewidth]{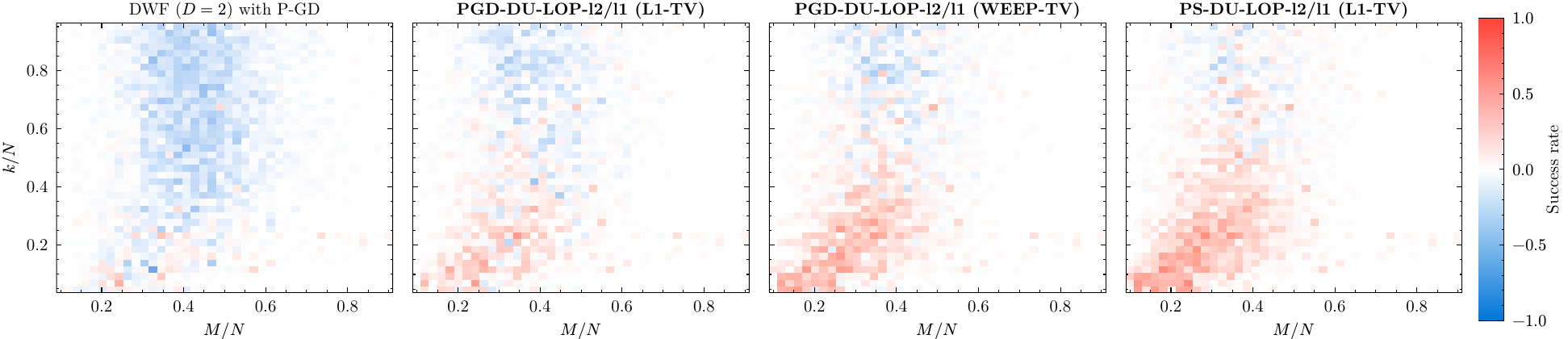}
        \caption{Differences between phase transition diagrams. Red regions indicate higher success than LT-ISTA.}
        \label{fig:pt_diff}
    \end{subfigure}
    \caption{Phase transition analysis across the measurement rate $M/N$ and sparsity rate $k/N$ grid.}
    \label{fig:phase_transition}
\end{figure}
}

\newcommand{\figptoutlier}{
\begin{figure}[t]
    \centering
    \includegraphics[width=0.6\linewidth]{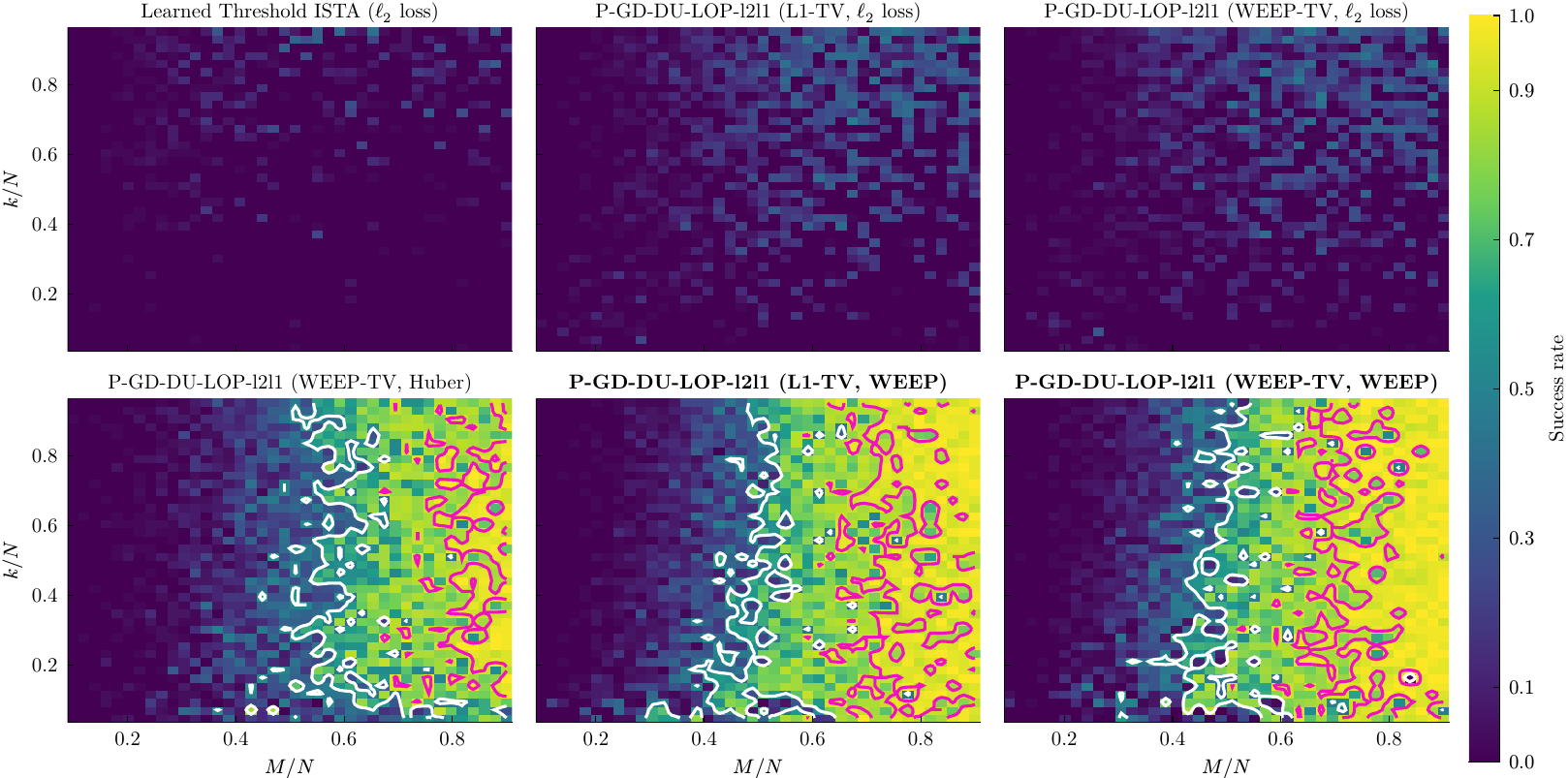}
    \caption{Phase transition under impulsive noise. The P-GD with the WEEP fidelity shows superior resilience to outliers.}
    \label{fig:pt_outlier}
\end{figure}
}

\begin{document}

\title{\textbf{Deep Unfolded Latent Optimally Partitioned-\ltwolone Networks for Data-driven Block-Sparse Recovery}}

\author[1,2]{Takanobu Furuhashi}
\author[1]{Hidekata Hontani}
\author[2]{Qibin Zhao}
\author[1,2]{Tatsuya Yokota}

\affil[1]{Nagoya Institute of Technology}
\affil[2]{RIKEN Center for Advanced Intelligence Project}

\date{}
\maketitle

\section*{Abstract}
The convex Latent Optimal Partition (LOP)-\ltwolone approach enables block-sparse signal recovery with unknown partitions but relies on manual hyperparameter tuning.
Additionally, numerical instability in differentiating its proximal operator prevents its automatic parameter tuning via Deep Unfolding (DU).
To address these limitations, we propose two architectures: a stable framework utilizing implicit differentiation and a flexible variant leveraging Deep Weight Factorization (DWF).
The DWF-based approach also supports nonconvex smooth data fidelity terms.
Numerical experiments demonstrate that DU-LOP-\ltwolone yields competitive performance and high resilience against impulsive noise.

\section{Introduction}

Block-sparse signal recovery is fundamental to applications such as image data mining and radar imaging~\cite{qianHyperspectralImageryRestoration2013,wangEnhancedISARImaging2014,gaoBlockSparseRPCASalient2014,liuBackgroundSubtractionBased2015,liStructuredSparseCoding2022,zhaoHyperspectralAnomalyDetection2023,kurodaTheoreticalValidationLatent2025}. This paradigm typically addresses ground truth whose nonzero entries cluster in contiguous blocks.

We address signal recovery from the linear observation model $\bm{y} = \bm{A}\bm{x} + \bm{\epsilon}$,
where $\bm{x} \in \mathbb{R}^N$ is the block-sparse signal, $\bm{y} \in \mathbb{R}^M$ represents the measurements, and $\bm{A} \in \mathbb{R}^{M \times N}$ denotes the sensing matrix.
$\bm{\epsilon} \in \mathbb{R}^M$ is additive noise, which often contains outliers in practical scenarios~\cite{carrilloRobustSamplingReconstruction2010,studerRecoverySparselyCorrupted2012,liuBackgroundSubtractionBased2015,yuanL0TVNewMethod2015,wanRobustBayesianCompressed2017}.

A critical challenge is that underlying block partitions are seldom known \textit{a priori}.
While conventional mixed \ltwolone-norm (Group Lasso)~\cite{yuanModelSelectionEstimation2006} penalties are effective for fixed partitions~\cite{stojnicReconstructionBlockSparseSignals2009,eldarBlocksparseSignalsUncertainty2010,lvGroupLassoStable2011,zhangExtensionSBLAlgorithms2013}, they suffer severe performance degradation when the assumed partitions do not match the actual ones~\cite{9729560,zhangExtensionSBLAlgorithms2013}. To resolve this issue, Kuroda and Kitahara~\cite{9729560} introduced the Latent Optimally Partitioned (LOP)-\ltwolone penalty. This method discovers the optimal partition and calculates the value of the mixed \ltwolone-norm via iterative convex optimization, which is computationally more efficient than existing greedy and Bayesian approaches~\cite{baraniukModelBasedCompressiveSensing2010,zhangExtensionSBLAlgorithms2013,fangPatterncoupledSparseBayesian2015,santBlockSparseSignalRecovery2022}.

A practical limitation of LOP-\ltwolone lies in the manual hyperparameter tuning.
Specifically, it requires appropriately setting both the regularization strength for sparsity ($\lambda$) and the total variation bound controlling the number of block partitions ($\alpha$) as partly discussed in \cite{9729560}.
Exploring their combinations via grid search is often impractical due to the vast search space.
Deep Unfolding (DU)~\cite{hersheyDeepUnfoldingModelBased2014,balatsoukas-stimmingDeepUnfoldingCommunications2019,mongaAlgorithmUnrollingInterpretable2021,shlezingerDeepUnfoldingRecent2025} resolves this by unrolling the iterative optimization steps into layers of a neural network. By treating the hyperparameters as learnable weights, DU leverages backpropagation on a training dataset to automatically optimize them, replacing manual search with data-driven estimation.

However, most existing DU methods for sparse recovery~\cite{borgerdingAMPInspiredDeepNetworks2017,zhangISTANetInterpretableOptimizationInspired2018,itoTrainableISTASparse2019,fuDeepUnfoldingNetwork2021,gaoDeepLearningBasedChannel2023,wuRPCANetDeepUnfolding2024}, including the learned iterative shrinkage thresholding algorithm (LISTA)~\cite{gregorLearningFastApproximations2010}, cannot exploit block-sparse priors or require predefined block boundaries, which prevents them from handling unknown partitions.
Furthermore, proximal splitting solvers~\cite{9729560,kurodaConvexnonconvexFrameworkEnhancing2025} for the LOP-\ltwolone method rely on Cardano's formula to solve cubic equations~\cite{bauschkeRealRootsReal2023}, which leads to numerical instabilities during backpropagation, hindering direct deep unfolding.
They also restrict the data fidelity to convex and \textit{prox-friendly} functions, limiting the use of robust nonconvex losses against outliers.

\begin{figure*}[t]
    \begin{minipage}[b]{0.49\textwidth}
        \centering
        \resizebox{\linewidth}{!}{\begin{tikzpicture}[
        node distance=0.8cm,
        block/.style={rectangle, draw, thick, text width=3.8cm, align=center, rounded corners, minimum height=0.7cm, font=\footnotesize},
        header/.style={
                rectangle split, rectangle split parts=2,
                draw, thick, text width=3.8cm, align=center, rounded corners,
                minimum height=0.7cm, font=\footnotesize,
                rectangle split part fill={cyan!20, cyan!10}
            },
        arrow/.style={-Stealth, thick}
    ]
    \node (lop) [block] {\textbf{LOP-\ltwolone} with \\ \scriptsize Proximal Splitting  (\cref{alg:kuroda})};
    \node (du) [block, right=0.3cm of lop] {\textbf{Deep Unrolling (DU)} \\ \scriptsize Data-driven Learning};

    \coordinate (merge_point) at ($(lop.south)!0.5!(du.south) + (0,-0.4cm)$);

    \node (ours_implicit) [header, below=1.2cm of lop] {
        \textbf{PS-DU-LOP-\ltwolone}
        \nodepart{two}
        \scriptsize \textbf{Implicit Differentiation} \\
        \scriptsize for $\text{prox}_{\gamma \lambda \phi}(x, \sigma)$ in \cref{alg:kuroda}
    };
    \node (ours_dwf) [header, below=1.2cm of du] {
        \textbf{P-GD-DU-LOP-\ltwolone}
        \nodepart{two}
        \scriptsize \textbf{Differentiable LOP-\ltwolone} \\
        \scriptsize with DWF + P-GD (\cref{alg:pgd})
    };

    \draw [thick] (lop.south) -- (lop.south |- merge_point) -- (merge_point);
    \draw [thick] (du.south) -- (du.south |- merge_point) -- (merge_point);

    \draw [thick] (merge_point) -- ++(0,-0.3) coordinate (split);
    \draw [arrow] (split) -| (ours_implicit.north);
    \draw [arrow] (split) -| (ours_dwf.north);

    \node [font=\tiny, align=right, anchor=north east, inner sep=2pt] at (lop.south) {Block-Sparse \\ Signal Recovery};
    \node [font=\tiny, align=left, anchor=north west, inner sep=2pt] at (du.south) {Hyperparameter \\ Tuning};
    \node [font=\tiny, text=blue, align=center, anchor=north, inner sep=7pt] at (split) {↓Solves Backprop Instability↓};
    \node [font=\tiny, text=blue, align=left, anchor=south west, inner sep=3pt] at (ours_dwf.north) {Broadens Fidelity \\ Applicability↓};
\end{tikzpicture}}
        \caption{Relationship between the convex LOP-\ltwolone approach and our deep unfolded networks.}
        \label{fig:flowchart}
    \end{minipage}
    \hfill
    \begin{minipage}[b]{0.49\textwidth}
        \centering
        \resizebox{\linewidth}{!}{\begin{tikzpicture}[>=Stealth, thick, scale=0.6]
    \def\SignalValues{0,0,0,0, 1.2,-0.8,1.8,-1.0,-0.5, 0,0,0,0,0, 1.5, -0.7, 0,0,0, -0.8,1.7,1.1}
    \def\vscale{0.5} 

    \xdef\BlockDataList{}
    \def\cstart{1} \def\csqsum{0} \def\ccount{0} \def\ltype{-1}
    \foreach \v [count=\i] in \SignalValues {
        \pgfmathsetmacro{\vsq}{\v*\v}
        \pgfmathtruncatemacro{\ctype}{(\v == 0 ? 0 : 1)}
        \ifnum\i=1 \xdef\ltype{\ctype} \fi
        \ifnum\ctype=\ltype
            \pgfmathsetmacro{\nsqsum}{\csqsum + \vsq}
            \pgfmathsetmacro{\ncount}{\ccount + 1}
            \xdef\csqsum{\nsqsum} \xdef\ccount{\ncount}
        \else
            \pgfmathsetmacro{\btau}{sqrt(\csqsum / \ccount)}
            \xdef\BlockDataList{\BlockDataList \cstart/\the\numexpr\i-1\relax/\ltype/\btau, }
            \xdef\cstart{\i} \xdef\ltype{\ctype} \xdef\csqsum{\vsq} \xdef\ccount{1}
        \fi
        \xdef\finalidx{\i}
    }
    \pgfmathsetmacro{\btau}{sqrt(\csqsum / \ccount)}
    \xdef\BlockDataList{\BlockDataList \cstart/\finalidx/\ltype/\btau}

    \def\margin{0.5} 
    \def\dOne{2.3}  \def\hTwo{2*\vscale}  \def\dTwo{0.0}  \def\hThree{1.5*\vscale}
    \pgfmathsetmacro{\yOne}{0}
    \pgfmathsetmacro{\yTwo}{-(\dOne + \hTwo + \margin)}
    \pgfmathsetmacro{\yThree}{\yTwo - (\dTwo + \hThree + \margin)}

    \foreach \tier/\y [count=\ty from 0] in {x/\yOne, \sigma/\yTwo, D\sigma/\yThree} {
            \begin{scope}[yshift=\y cm]
                \draw[-] (-0.5, 0) -- (11, 0) node[right] {$\bm\tier$};

                \ifnum\ty=0 
                    \foreach \v [count=\i] in \SignalValues {
                        \pgfmathsetmacro{\x}{0.45*\i}
                        \ifdim\v pt=0pt \fill (\x, 0) circle (2.8pt);
                        \else \draw (\x, 0) -- (\x, \v*\vscale); \fill (\x, \v*\vscale) circle (2.8pt); \fi
                        \ifnum\i=1 \node[below, shift={(0,-0.1)}, font=\small] at (\x, 0) {$x_1$};\fi
                        \ifnum\i=2 \node[below, shift={(0.05,-0.1)}, font=\small] at (\x, 0) {$x_2$};\fi
                        \ifnum\i=3 \node[below, shift={(0.2,-0.15)}, font=\small] at (\x, 0) {$\dots$};\fi
                        \ifnum\i=25 \node[below, shift={(0,-0.1)}, font=\small] at (\x, 0) {$x_N$};\fi
                    }
                    \foreach \s/\e/\tyl/\btau [count=\bidx] in \BlockDataList {
                        \pgfmathsetmacro{\xs}{0.45*\s - 0.15} \pgfmathsetmacro{\xe}{0.45*\e + 0.15}
                        \begin{scope}[red, decoration={brace, mirror, amplitude=6pt}]
                            \draw[decorate] (\xs, -1.0) -- (\xe, -1.0) node[midway, below=8pt, font=\small] {$\mathcal{B}_{\bidx}$};
                        \end{scope}
                    }
                \fi

                \ifnum\ty=1 
                    \foreach \s/\e/\tyl/\btau [count=\bidx] in \BlockDataList {
                        \foreach \j in {\s,...,\e} {
                                \pgfmathsetmacro{\x}{0.45*\j}
                                \ifdim\btau pt=0pt \fill (\x, 0) circle (2.8pt);
                                \else \draw (\x, 0) -- (\x, \btau*\vscale); \fill (\x, \btau*\vscale) circle (2.8pt); \fi
                            }
                        \pgfmathsetmacro{\xs}{0.45*\s - 0.15} \pgfmathsetmacro{\xe}{0.45*\e + 0.15}
                        \draw[cyan, thick] (\xs, -0.3) rectangle (\xe, \btau*\vscale + 0.3);
                        \ifdim\btau pt=0pt \else \node[cyan, font=\footnotesize, anchor=south] at (0.5*\xs+0.5*\xe, \btau*\vscale + 0.3) {$\sigma_n = \tau_{\bidx}$}; \fi
                    }
                    \node[below, font=\small] at (0.45*1, -0.2) {$\sigma_1$};
                    \node[below, font=\small] at (0.45*2.2, -0.2) {$\sigma_2$};
                    \node[below, font=\small] at (0.45*3.8, -0.3) {$\dots$};
                    \node[below, font=\small] at (0.45*22, -0.2) {$\sigma_N$};
                \fi

                \ifnum\ty=2 
                    \foreach \s/\e/\tyl/\btau [remember=\btau as \lastbtau (initially 0)] in \BlockDataList {
                        \pgfmathsetmacro{\diff}{\btau - \lastbtau} \pgfmathsetmacro{\x}{0.45*(\s - 0.5)}
                        \ifdim\diff pt=0pt \ifnum\s=1 \else \fill (\x, 0) circle (2.8pt); \fi
                        \else \draw (\x, 0) -- (\x, \diff*\vscale); \fill (\x, \diff*\vscale) circle (2.8pt); \fi
                        \ifnum\s=\e \else \pgfmathtruncatemacro{\snext}{\s+1} \foreach \j in {\snext,...,\e} { \fill ({0.45*(\j-0.5)},0) circle (2.8pt); } \fi
                    }
                \fi
            \end{scope}
        }
\end{tikzpicture}}
        \caption{Conceptual illustration of block-sparse signal $\bm{x}$ and its partitioning scheme with a piecewise constant vector $\bm{\sigma}$.}
        \label{fig:unknown_partitions}
    \end{minipage}
\end{figure*}

Our main contributions are:
(i) \textbf{Stable Unfolded Networks} establishing both a proximal splitting-based network with implicit differentiation and a flexible gradient-based alternative using the reparameterization method called Deep Weight Factorization (DWF);
(ii) \textbf{Automatic Parameter Tuning} via a DU network that eliminates manual heuristic selection;
(iii) \textbf{Robust Block-Sparse Recovery} against outliers by making the LOP-\ltwolone penalty differentiable, facilitating the integration of robust and nonconvex penalties.

\textbf{Notation}: We denote scalars by lowercase letters ($x$), vectors by bold lowercase ($\bm{x}$), and matrices by bold uppercase ($\bm{A}$).
The $i$-th entry of $\bm{x}$ is $x_i$, and $\mathbb{R}$ is the set of real numbers.
$|\mathcal{B}|$ denotes the cardinality of a set $\mathcal{B}$.
$\|\bm{x}\|_p$ is the $\ell_p$-norm, and the mixed \ltwolone-norm is $\|\bm{x}\|_{2,1}^{(\mathcal{B}_k)_{k=1}^j} = \sum_{k=1}^j \sqrt{|\mathcal{B}_k|} \|\bm{x}_{\mathcal{B}_k}\|_2$ for block partitions $(\mathcal{B}_k)_{k=1}^j$.
$\odot$ denotes the element-wise (Hadamard) product, $\nabla_{\bm{x}}$ is the gradient with respect to $\bm{x}$, and $\bm{D} \in \mathbb{R}^{(N-1) \times N}$ is the first-order finite difference matrix defining $(\bm{Dx})_i = x_{i+1} - x_i$.
The proximal operator of a convex function $f: \mathbb{R}^N \to \mathbb{R} \cup \{\infty\}$ is $\text{prox}_f(\bm{v}) = \arg\min_{\bm{x} \in \mathbb{R}^N} \{ f(\bm{x}) + \frac{1}{2}\|\bm{x}-\bm{v}\|_2^2 \}$.

\section{Latent Optimally Partitioned \ltwolone}
\label{sec:lop-l2l1}
LOP-\ltwolone~\cite{9729560} minimizes the mixed \ltwolone penalty over all possible contiguous partitions, performing joint automatic block partitioning and block-sparse signal recovery. For a vector $\bm{x} \in \mathbb{R}^N$, the penalty is formulated as follows:
\begin{equation}
    \psi_K (\bm{x}) := \min_{j \in \{1, \dots, K\}} \min_{\{\mathcal{B}_k\}_{k=1}^j \in \mathcal{P}_j} \sum_{k=1}^j \sqrt{|\mathcal{B}_k|} \|\bm{x}_{\mathcal{B}_k}\|_2,
    \label{eq:lop-combinatorial}
\end{equation}
where $(\mathcal{B}_k)_{k=1}^j$ are nonoverlapping partitions of $\{1, \dots, N\}$ into $j$ contiguous blocks $\mathcal{B}_k$ as shown in the top of \cref{fig:unknown_partitions}\footnote{See \cite[Eq. (3)]{9729560} for the formal definition of $\mathcal{P}_j$.}.
The factor $\sqrt{|\mathcal{B}_k|}$ balances the penalty across blocks of different sizes, ensuring valid partition estimation in \cref{eq:lop-combinatorial}.

To avoid the combinatorial search over the family of partitions $\mathcal{P}_j$, a convex relaxation was proposed in~\cite{9729560} using a latent continuous variable $\bm{\sigma} \in \mathbb{R}^N$:
\begin{equation*}
    \Psi_\alpha (\bm{x}) := \min_{\bm{\sigma} \in \mathbb{R}^N} \sum_{n=1}^N \phi(x_n, \sigma_n) \text{ s.t. } \|\bm{D} \bm{\sigma}\|_1 \leq \alpha,
\end{equation*}
where the variational function $\phi(x, \sigma)$, mainly derived from the identity $|x| = \min_{\sigma \geq 0} (x^2/(2\sigma) + \sigma/2)$, is defined as:
\begin{equation*}
    \phi(x, \sigma) := \begin{cases} \frac{x^2}{2\sigma} + \frac{\sigma}{2} & \text{if } \sigma > 0, \\ 0 & \text{if } x = 0 \text{ and } \sigma = 0, \\ \infty & \text{otherwise,} \end{cases}
\end{equation*}
and $\alpha \geq 0$ is a Total Variation (TV) bound that controls the number of block boundaries.
\figconceptual
\tblarchcomp
By promoting piecewise-constant structures in $\bm{\sigma}$, the TV constraint $\|\bm{D} \bm{\sigma}\|_1 \leq \alpha$ identifies block boundaries as shown in \cref{fig:unknown_partitions}.

\figcompflow
The existing LOP-\ltwolone method aims to minimize:
\begin{equation}
    \minimize_{\bm{x} \in \mathbb{R}^N} f(\bm{Ax}) + \lambda \Psi_\alpha(\bm{x}),
    \label{eq:min-lop-l2l1}
\end{equation}
via a proximal splitting algorithm (\cref{alg:kuroda}), which requires $f$ to be a \textit{prox-friendly} function whose proximal operator can be computed with low complexity. For $f(\bm{u}) = \|\bm{y} - \bm{u}\|_2^2/2$, we have $\text{prox}_{\gamma f}(\bm{u}) = (\gamma \bm{y} + \bm{u}) / (\gamma + 1)$.
Here, $\lambda \geq 0$ is the regularization strength. A primary practical challenge is that hyperparameters such as $\lambda$ and $\alpha$ require careful, data-dependent tuning to ensure sufficient signal recovery as discussed in \cite[Remark 3]{9729560}.
This algorithm requires the proximal operator of $\phi$ to be computed explicitly as follows:
\begin{equation}
    \label{eq:prox_phi}
    \text{prox}_{\gamma \lambda \phi}(x, \sigma) = \begin{cases} (0, 0) & \text{if } 2 \gamma \lambda \sigma + |x|^2 \leq \gamma^2 \lambda^2, \\ (0, \sigma - \frac{\gamma \lambda}{2}) & \text{if } x = 0 \text{ and } 2 \sigma > \gamma \lambda, \\ (x - \gamma \lambda s\frac{x}{|x|}, \sigma + \gamma \lambda\frac{s^2 - 1}{2}) & \text{otherwise}, \end{cases}
\end{equation}
where $s$ is the unique positive root of the cubic equation:
\begin{equation}
    \vspace{-0.25em}
    s^3 + p s + q = 0.
    \vspace{-0.125em}
    \label{eq:cubic}
\end{equation}
The root $s$ is obtained via Cardano's formula~\cite{combettesPerspectiveFunctionsProximal2018,bauschkeRealRootsReal2023}:
\begin{equation}
    s =
    \begin{cases}
        \sqrt[3]{-\frac{q}{2} + \sqrt{-D}} + \sqrt[3]{-\frac{q}{2} - \sqrt{-D}}           & \text{if } D < 0, \\
        2\sqrt[3]{-\frac{q}{2}}                                                           & \text{if } D = 0, \\
        2 \sqrt[6]{\frac{q^2}{4} + D} \cos \left( \frac{\arctan(-2\sqrt{D}/q)}{3} \right) & \text{if } D > 0,
    \end{cases}
    \label{eq:cardano}
\end{equation}
where $p = \frac{2}{\gamma \lambda} \sigma + 1$, $q = - \frac{2}{\gamma \lambda} |x|$ and $D = -(q/2)^2 - (p/3)^3$.

\figstability
However, relying on this explicit solution in a deep unfolding framework introduces a backpropagation instability. Direct backpropagation for the computation flow (see \cref{fig:comp_flow}) requires evaluating the derivatives of Cardano's formula \cref{eq:cardano}. Near the multiple root boundary where $D \nearrow 0$, intermediate terms such as $q/\sqrt{D}$ in the chain rule diverge to infinity. This causes significant $\infty - \infty$ cancellation~\cite{goldbergWhatEveryComputer1991}, particularly in \texttt{float32} arithmetic as visualized in \cref{fig:stability_plot}.

\begin{algorithm}[t]
    \small
    \caption{Proximal Splitting for LOP-\ltwolone \cite[Alg.1]{9729560}}
    \label{alg:kuroda}
    \begin{algorithmic}
        \State \textbf{Input:} $\gamma > 0, \mu_1, \mu_2 \geq 0$, $\bm{x}^{(0)}, \bm{\sigma}^{(0)}, \bm{u}^{(0)}, \bm{\eta}^{(0)}, \bm{r}_1^{(0)}, \bm{r}_2^{(0)}$
        \For{$i = 0, \dots, L-1$}
        \State $\tilde{\bm{x}}^{(i+1)} = \bm{x}^{(i)} + \mu_1 \bm{A}^\top (\bm{r}_1^{(i)} - \mu_1 (\bm{A} \bm{x}^{(i)} - \bm{u}^{(i)}))$
        \State $\tilde{\bm{\sigma}}^{(i+1)} = \bm{\sigma}^{(i)} + \mu_2 \bm{D}^\top (\bm{r}_2^{(i)} - \mu_2 (\bm{D} \bm{\sigma}^{(i)} - \bm{\eta}^{(i)}))$
        \State $\tilde{\bm{u}}^{(i+1)} = \bm{u}^{(i)} - \mu_1 (\bm{r}_1^{(i)} - \mu_1 (\bm{A} \bm{x}^{(i)} - \bm{u}^{(i)}))$
        \State $\tilde{\bm{\eta}}^{(i+1)} = \bm{\eta}^{(i)} - \mu_2 (\bm{r}_2^{(i)} - \mu_2 (\bm{D} \bm{\sigma}^{(i)} - \bm{\eta}^{(i)}))$
        \State $(\bm{x}^{(i+1)}, \bm{\sigma}^{(i+1)}) = (\text{prox}_{\gamma \lambda \phi}(x_n, \sigma_n))_{n=1}^N$ \Comment{see \cref{eq:prox_phi}}
        \State $\bm{u}^{(i+1)} = \text{prox}_{\gamma f} (\tilde{\bm{u}}^{(i+1)})$
        \State $\bm{\eta}^{(i+1)} = \text{prox}_{\gamma\beta \|\cdot\|_1} (\tilde{\bm{\eta}}^{(i+1)})$ \Comment{when using the TV penalty $\beta\|\bm{D}\bm{\sigma}\|_1$ instead of the TV constraint $\|\bm{D}\bm{\sigma}\|_1 \leq \alpha$}
        \State $\bm{r}_1^{(i+1)} = \bm{r}_1^{(i)} - \mu_1 (\bm{A} \bm{x}^{(i+1)} - \bm{u}^{(i+1)})$
        \State $\bm{r}_2^{(i+1)} = \bm{r}_2^{(i)} - \mu_2 (\bm{D} \bm{\sigma}^{(i+1)} - \bm{\eta}^{(i+1)})$
        \EndFor
        \State \textbf{Output:} $\bm{x}^{(L)}, \bm{\sigma}^{(L)}$
    \end{algorithmic}
\end{algorithm}

\section{Proposed Method}
We propose two DU-LOP-\ltwolone architectures (\cref{tbl:arch_comp}): \textit{Proximal Splitting-based Deep Unfolded LOP-\ltwolone (PS-DU-LOP-\ltwolone)}, and \textit{Preconditioned Gradient Descent-based Deep Unfolded LOP-\ltwolone (P-GD-DU-LOP-\ltwolone)} networks to resolve the gradient instability issue.

\subsection{Method I: PS-DU-LOP-\ltwolone}
To avoid the numerical instabilities discussed in \cref{sec:lop-l2l1}, we apply implicit differentiation to the equilibrium condition $f(s, p, q) = s^3 + p s + q = 0$, directly yielding:
\begin{equation}
    \frac{\partial s}{\partial p} = - \frac{s}{3s^2 + p}, \quad \frac{\partial s}{\partial q} = - \frac{1}{3s^2 + p}.
    \label{eq:implicit-diff}
\end{equation}

\textbf{Numerical Stability}:
Unlike direct differentiation of \cref{eq:cardano}, which suffers from $\infty - \infty$ cancellation near $D=0$, these gradients address the unstable backpropagation. Although the denominator $3s^2 + p$ can vanish, the resulting singularity does not involve catastrophic cancellation, yielding more stable gradients in practice (see \cref{fig:stability_plot}).

In the unfolded architecture, we treat the per-layer parameters $(\gamma^{(k)}, \lambda^{(k)}, \mu_1^{(k)}, \mu_2^{(k)}, \beta^{(k)})$ as learnable. Note that we replace the TV constraint $\|\bm{D}\bm{\sigma}\|_1 \leq \alpha$ with the TV penalty $\beta \|\bm{D}\bm{\sigma}\|_1$ with $\beta \geq 0$ for simplicity of backpropagation.

\subsection{Method II: P-GD-DU-LOP-\ltwolone}
\begin{algorithm}[t]
    \small
    \caption{P-GD-DU-LOP-\ltwolone with WEEP-TV}
    \label{alg:pgd}
    \begin{algorithmic}
        \State \textbf{Input:} $\bm{y}, \lambda, \beta \geq 0, a > 0, b \geq 0, \eta > 0, \bm{u}^{(0)}, \bm{\sigma}^{(0)}$
        \For{$k = 0, \dots, L-1$}
        \State $\Delta_{\bm{u}} = \nabla_{\bm{u}} f(\bm{u}^{(k)} \odot \bm{\sigma}^{(k)}) + \lambda \bm{u}^{(k)}$
        \State $\Delta_{\bm{\sigma}} = \nabla_{\bm{\sigma}} f(\bm{u}^{(k)} \odot \bm{\sigma}^{(k)}) + \lambda \bm{\sigma}^{(k)} + \beta \bm{D}^\top \Omega'(\bm{D} \bm{\sigma}^{(k)})$
        \State $\bm{u}^{(k+1)} = \bm{u}^{(k)} - \eta \nabla \phi^*(\Delta_{\bm{u}})$ \Comment{Preconditioned GD}
        \State $\bm{\sigma}^{(k+1)} = \bm{\sigma}^{(k)} - \eta \nabla \phi^*(\Delta_{\bm{\sigma}})$ \Comment{Preconditioned GD}
        \EndFor
        \State \textbf{Output:} $\bm{x}^{(L)} = \bm{u}^{(L)} \odot \bm{\sigma}^{(L)}$
    \end{algorithmic}
\end{algorithm}

We propose a fully differentiable LOP-\ltwolone alternative via Deep Weight Factorization (DWF)~\cite{kolbDeepWeightFactorization2024}, reparameterizing $\bm{x} = \bm{u} \odot \bm{\sigma}$. This replaces non-differentiable penalties with smooth $\ell_2$ terms to enable purely gradient-based optimization, thereby avoiding the Cardano's formula \cref{eq:cardano}.

\newtheorem{theorem}{Theorem}
\begin{theorem}[DWF-based LOP-\ltwolone Penalty]
    The minimization problem $\min_{\bm{x}} g(\bm{x}) + \lambda \psi_K(\bm{x})$ with a differentiable function $g$ and \cref{eq:lop-combinatorial} is equivalent to the following formulation via the DWF reparameterization $\bm{x} = \bm{u} \odot \bm{\sigma} \in \mathbb{R}^N$:
    \begin{equation}
        \min_{\bm{u}, \bm{\sigma}} g(\bm{u} \odot \bm{\sigma}) + \frac{\lambda}{2} (\|\bm{u}\|_2^2 + \|\bm{\sigma}\|_2^2) \text{ s.t. } \|\bm{D} \bm{\sigma}\|_0 \leq K-1.
    \end{equation}
\end{theorem}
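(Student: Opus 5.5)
The plan is to reduce the statement to a pointwise identity for the penalty. For fixed $\bm{x}\in\mathbb{R}^N$, define
\begin{equation*}
    h_K(\bm{x}) := \min\Bigl\{ \tfrac{1}{2}\bigl(\|\bm{u}\|_2^2+\|\bm{\sigma}\|_2^2\bigr) \;:\; \bm{u}\odot\bm{\sigma}=\bm{x},\ \|\bm{D}\bm{\sigma}\|_0\le K-1 \Bigr\}.
\end{equation*}
The reparameterized problem can be minimized first over all $(\bm{u},\bm{\sigma})$ with $\bm{u}\odot\bm{\sigma}=\bm{x}$ fixed, and then over $\bm{x}$. This turns it into $\min_{\bm{x}} g(\bm{x})+\lambda h_K(\bm{x})$. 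It therefore suffices to prove $h_K=\psi_K$ with $\psi_K$ as in \cref{eq:lop-combinatorial}, and to show that the minimum defining $h_K$ is attained. Attainment gives the correspondence of minimizers: $(\bm{u}^\star,\bm{\sigma}^\star)\mapsto \bm{u}^\star\odot\bm{\sigma}^\star$ in one direction, and in the other an optimal factorization of any minimizer $\bm{x}^\star$. Differentiability of $g$ plays no role in the equivalence itself.

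\textbf{Lower bound $h_K\ge\psi_K$.} Take any feasible $(\bm{u},\bm{\sigma})$. The condition $\|\bm{D}\bm{\sigma}\|_0\le K-1$ means that $\bm{\sigma}$ is piecewise constant with at most $K$ maximal constant runs. These runs form a contiguous partition $(\mathcal{B}_k)_{k=1}^j\in\mathcal{P}_j$ with $j\le K$, and $\sigma_n=\tau_k$ on $\mathcal{B}_k$. On each block I distinguish two cases.
\begin{itemize}
    \item If $\tau_k=0$, feasibility forces $\bm{x}_{\mathcal{B}_k}=\bm{0}$. The block cost is then at least $0=\sqrt{|\mathcal{B}_k|}\,\|\bm{x}_{\mathcal{B}_k}\|_2$.
    \item If $\tau_k\ne0$, then $u_n=x_n/\tau_k$ on the block, so the block cost is $\tfrac12\bigl(\|\bm{x}_{\mathcal{B}_k}\|_2^2/\tau_k^2+|\mathcal{B}_k|\tau_k^2\bigr)$. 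By AM--GM this is at least $\sqrt{|\mathcal{B}_k|}\,\|\bm{x}_{\mathcal{B}_k}\|_2$, which is the same scalar identity $|x|=\min_{\sigma}(\cdot)$ that underlies $\phi$.
\end{itemize}
Summing over blocks shows the cost is at least $\|\bm{x}\|_{2,1}^{(\mathcal{B}_k)}$, which is in turn at least $\psi_K(\bm{x})$.

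\textbf{Upper bound and attainment.} Let $(\mathcal{B}_k)_{k=1}^j$ with $j\le K$ be any partition attaining the minimum in \cref{eq:lop-combinatorial}; it exists because the family of partitions is finite. On each block I set
\begin{equation*}
    \tau_k=\bigl(\|\bm{x}_{\mathcal{B}_k}\|_2/\sqrt{|\mathcal{B}_k|}\bigr)^{1/2}\ge0,
\end{equation*}
take $\sigma_n=\tau_k$ on $\mathcal{B}_k$, and put $u_n=x_n/\tau_k$ when $\tau_k>0$ and $u_n=0$ otherwise. The sign of $x_n$ is carried by $u_n$. Then $\bm{u}\odot\bm{\sigma}=\bm{x}$ and $\|\bm{D}\bm{\sigma}\|_0\le j-1\le K-1$. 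Adjacent blocks may share the same $\tau$, which only lowers $\|\bm{D}\bm{\sigma}\|_0$ and is harmless. AM--GM holds with equality on every block, so the cost equals $\psi_K(\bm{x})$. Hence $h_K=\psi_K$, the minimum is attained, and the two problems have equal optimal values with minimizers mapping to each other.

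\textbf{Expected obstacle.} The delicate step is the bookkeeping in the lower bound. The actual runs of $\bm{\sigma}$ may be fewer than $K$ and may have equal values on adjacent runs. The case $\tau_k=0$ needs separate treatment, since then $u$ is free on the block. The first case of the lower bound handles it: zero entries of $\bm{\sigma}$ force zero entries of $\bm{x}$, so $\bm{u}$ only adds cost there. I would also state explicitly that $\psi_K$ already minimizes over all $j\le K$, so any partition with at most $K$ blocks, coarse or not, is admissible.
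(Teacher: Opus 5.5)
Your proof is correct. It rests on the same two ingredients as the paper's. The first is the blockwise identity $\sqrt{|\mathcal{B}|}\,\|\bm{x}_{\mathcal{B}}\|_2=\min_{\tau\bm{u}=\bm{x}_{\mathcal{B}}}\tfrac12\bigl(\|\bm{u}\|_2^2+|\mathcal{B}|\tau^2\bigr)$. The second is the identification of $\{\bm{\sigma}:\|\bm{D}\bm{\sigma}\|_0\le K-1\}$ with piecewise-constant expansions $\bm{\tau}^\dagger$ over contiguous partitions into at most $K$ blocks.

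The organization differs. The paper fixes a partition and imports the factorized form of the whole objective $g(\bm{x})+\lambda\|\bm{x}\|_{2,1}^{(\mathcal{B}_k)_{k=1}^j}$ from the DWF literature. It then swaps $\min_{\bm{x}}$ with the minimum over partitions and merges the union over partitions into the $\ell_0$ constraint, a step it only describes in words.

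You instead fix $\bm{x}$ and prove the pointwise identity $h_K=\psi_K$ from scratch. AM--GM over the maximal constant runs of a feasible $\bm{\sigma}$ gives the lower bound; this is precisely the direction of the set identity the paper leaves implicit. An explicit factorization on an optimal partition gives the upper bound together with attainment.

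What your route buys is self-containedness and a more complete argument:
\begin{itemize}
\item it treats zero blocks;
\item it yields a correspondence of minimizers, not only equality of optimal values;
\item it makes plain that no property of $g$ (differentiability included) is used;
\item it uses the correct range $j\le K$, whereas the paper's proof writes $j\in\{1,\dots,K-1\}$.
\end{itemize}
What the paper's route buys is brevity, and an explicit view of the result as a union over partitions of known factorized group-lasso problems.

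The only unstated assumption in your argument is $\lambda\ge0$, which you need to pull $\lambda$ out of the inner infimum; the paper assumes it as well.
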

\begin{proof}
    The $\ell_1$ problem $\min f(\bm{x}) + \lambda \|\bm{x}\|_1$ is equivalent to its factorized form $\min_{\bm{u}, \bm{\sigma}} g(\bm{u} \odot \bm{\sigma}) + \frac{\lambda}{2} (\|\bm{u}\|_2^2 + \|\bm{\sigma}\|_2^2)$~\cite{kolbDeepWeightFactorization2024,kolbSmoothingEdgesSmooth2026}. This factorization extends to the mixed \ltwolone-norm via the following identity for a fixed block $\mathcal{B}_k$~\cite{kolbSmoothingEdgesSmooth2026,kolbDifferentiableSparsityDGating2025}:
    \begin{align*}
         & \min_{\bm{x} \in \mathbb{R}^N} \left( g(\bm{x}) + \lambda \sum_{k=1}^j \sqrt{|\mathcal{B}_k|} \|\bm{x}_{\mathcal{B}_k}\|_2 \right)                                        \\
         & = \min_{\bm{u}, \bm{\tau}} \left( g(\bm{u} \odot \bm{\tau}^\dagger) + \frac{\lambda}{2} \sum_{k=1}^j (\|\bm{u}_{\mathcal{B}_k}\|_2^2 + |\mathcal{B}_k| \tau_k^2) \right),
    \end{align*}
    which yields a differentiable form of the mixed \ltwolone-norm. This equivalence holds for any contiguous partition $\{\mathcal{B}_k\}_{k=1}^j \in \mathcal{P}_j$ and $j \in \{1, ..., K-1\}$. By minimizing over all partitions, we obtain the DWF-based LOP-\ltwolone penalty:
    \begin{align*}
         & \min_{\bm{x} \in \mathbb{R}^N} \min_{j, \{\mathcal{B}_k\}_{k=1}^j} \left( g(\bm{x}) + \lambda \sum_{k=1}^j \sqrt{|\mathcal{B}_k|} \|\bm{x}_{\mathcal{B}_k}\|_2 \right)                                       \\
         & = \min_{j, \{\mathcal{B}_k\}_{k=1}^j} \min_{\bm{u}, \bm{\tau}} \left( g(\bm{u} \odot \bm{\tau}^\dagger) + \frac{\lambda}{2} \sum_{k=1}^j (\|\bm{u}_{\mathcal{B}_k}\|_2^2 + |\mathcal{B}_k| \tau_k^2) \right) \\
         & = \min_{\bm{u}, \bm{\sigma}} g(\bm{u} \odot \bm{\sigma}) + \frac{\lambda}{2} (\|\bm{u}\|_2^2 + \|\bm{\sigma}\|_2^2) \text{ s.t. } \|\bm{D} \bm{\sigma}\|_0 \leq K-1,
    \end{align*}
    where $\bm{\tau} \in \mathbb{R}^j$ contains blockwise factors and $\bm{\sigma} = \bm{\tau}^\dagger \in \mathbb{R}^N$ is its piecewise-constant expansion, i.e., $\sigma_n = \tau_k$ for $n \in \mathcal{B}_k$~\cite{kolbSmoothingEdgesSmooth2026}. The constraint $\|\bm{D} \bm{\sigma}\|_0 \leq K-1$ encodes that $\bm{\sigma}$ has at most $K$ constant pieces, where each block boundary corresponds to a non-zero jump in the difference $\bm{D}\bm{\sigma}$.
\end{proof}

We can relax the discrete $\ell_0$ constraint with the WEEP penalty $\Omega$~\cite{furuhashiWEEPDifferentiableNonconvex2026} (see \cref{sec:experiments}). The overall objective is:
\begin{equation*}
    \minimize_{\bm{u}, \bm{\sigma} \in \mathbb{R}^N} g(\bm{u} \odot \bm{\sigma}) + \frac{\lambda}{2} (\|\bm{u}\|_2^2 + \|\bm{\sigma}\|_2^2) + \beta \cdot \Omega(\bm{D} \bm{\sigma}).
\end{equation*}

\textbf{Numerical Stability}:
This gradient-based approach eliminates the need for proximal operators and the associated cubic equation solving required in the LOP-\ltwolone framework (\cref{sec:lop-l2l1}), avoiding its potential singularities.

\textbf{Flexibility}: The approach facilitates the use of nonconvex smooth losses for $g$, such as $g(\bm{x}) = \sum_{i} h((\bm{y} - \bm{A}\bm{x})_i; a, b)$ where WEEP's influence saturates for large residuals more strongly than the convex loss like the $\ell_2$ loss and Huber loss~\cite{huberRobustEstimationLocation1964}. This property more effectively ignores impulsive outliers and ensures more robust recovery.

While the DWF loss landscape is non-Lipschitz smooth and can lead to instability under vanilla gradient descent, we achieve more stable empirical convergence by using Preconditioned Gradient Descent (P-GD, \cref{alg:pgd})~\cite{bodardEscapingSaddlePoints2025}, where $\nabla \phi^*$ is the gradient of the convex conjugate $\phi^*(\bm{z}) = \sup_{\bm{x}} \langle \bm{x}, \bm{z} \rangle - \phi(\bm{x})$ of a reference function $\phi$ (e.g., $\cosh(\|\bm{z}\|) - 1$). That provides implicit gradient clipping, which also tends to accelerate training in nonconvex optimization~\cite{zhangWhyGradientClipping2020,zhangImprovedAnalysisClipping2020}.

\figphasetransition

\section{Numerical Experiments}
\label{sec:experiments}
\begin{table}[t]
    \caption{Model complexity comparison when using $\ell_2$ loss.}
    \label{tbl:complexity}
    \centering\footnotesize
    \setlength{\tabcolsep}{3pt}
    \begin{tabular}{lcl} \hline
        Model                        & Params/Layer & Learnable Variables                    \\ \hline
        LT-ISTA \& DWF ($\beta = 0$) & 2            & $\eta, \lambda$                        \\
        P-GD (L1-TV)                 & 3            & $\eta, \lambda, \beta$                 \\
        \textbf{P-GD (WEEP-TV)}      & 5            & $\eta, \lambda, \beta, a, b$           \\
        \textbf{PS (L1-TV)}          & 5            & $\gamma, \lambda, \beta, \mu_1, \mu_2$ \\ \hline
    \end{tabular}
\end{table}
\figptoutlier

We evaluate DU-LOP-\ltwolone through synthetic block-sparse signal recovery tasks, benchmarking against Learned Threshold ISTA (LT-ISTA) and a DWF baseline without TV penalty ($\beta = 0$), both of which do not explicitly consider underlying block structures. LT-ISTA is a variant of Learned ISTA (LISTA)~\cite{gregorLearningFastApproximations2010} that only learns the layer-specific thresholds and step sizes ($\eta^{(k)}, \lambda^{(k)}$).

\textbf{WEEP Penalty}:
We adopt the WEEP function~\cite{furuhashiWEEPDifferentiableNonconvex2026} as a differentiable nonconvex surrogate for the $\ell_0$ penalty.
We utilize it both as a TV penalty (WEEP-TV) for the vector $\bm{\sigma} \in \mathbb{R}^N$ and as a robust data fidelity loss $f$ due to its influence saturation against outliers.
WEEP-TV saturates for large gradients, enabling sharper block boundary discovery and more accurate partitioning than $\ell_1$-based penalties.

\textbf{Complexity}:
\cref{tbl:complexity} shows parameter efficiency. Our WEEP-TV variant requires 75 parameters for 15 layers, a negligible overhead justified by substantial recovery gains.

\textbf{Experimental Setup}:
We generated block-sparse signals $\bm{x} \in \mathbb{R}^N$ ($N=64$) by dividing the vector into $B=16$ blocks of size $B_{size}=4$. For a given sparsity $k$, we randomly activated $K = k/B_{size}$ blocks with coefficients drawn from the uniform distribution $\mathcal{U}([0.5, 1.0])$ and random signs. Inactive blocks were set to zero. Measurements $\bm{y} = \bm{A} \bm{x}$ were obtained using a Gaussian random matrix $\bm{A} \in \mathbb{R}^{M \times N}$ normalized such that $A_{i,j} \sim \mathcal{N}(0, M^{-1})$.
We evaluate signal recovery using three types of data fidelities $f$ in \cref{eq:min-lop-l2l1}: (i) the $\ell_2$ loss, (ii) Huber loss, and (iii) WEEP loss. For (ii) and (iii), the parameters are learned for each layer.

For each grid point, unfolded models (depth $L=15$) were independently trained end-to-end on 50 signal pairs using the Adam optimizer (learning rate $10^{-3}$) to minimize the mean squared error (MSE) $\|\hat{\bm{x}} - \bm{x}\|_2^2$, and evaluated on 50 test signals. Recovery success is defined by the signal-to-noise ratio (SNR): $10 \log_{10} (\|\bm{x}\|_2^2 / \|\hat{\bm{x}} - \bm{x}\|_2^2) \, [\mathrm{dB}]$. Trials with $\mathrm{SNR}$ exceeding 20\,dB were classified as successful.

\textbf{Results}:
We swept measurement $M/N$ and sparsity $K/N$ ratios across 50 trials per grid point.
\cref{fig:phase_transition} shows that the proposed PS-DU-LOP-\ltwolone yields the highest success rates over all. While P-GD-DU-LOP-\ltwolone (WEEP-TV) exhibit slightly lower recovery rates than the PS alternative, it still outperforms the $\ell_1$ baselines.
Method labels in \cref{fig:phase_transition,fig:pt_outlier} follow the format "Method (Regularizer, Fidelity)"; if the fidelity is omitted, the $\ell_2$ loss is applied.

For outlier robustness under additive Bernoulli impulsive noise (contamination probability $0.05$, magnitude $5.0$), methods using $\ell_2$ fidelity fail while our WEEP variants achieve the highest success (see \cref{fig:pt_outlier}). It emphasizes the robustness of the WEEP penalty against outliers, which is an advantage of our proposed gradient-based method.

\section{Conclusion}
We proposed DU-LOP-\ltwolone, a deep unfolding framework for block-sparse recovery under unknown partitions that automates hyperparameter tuning. Two methods were presented: a proximal splitting-based method and a gradient-based variant using smooth reparameterization (DWF). The latter enables nonconvex data fidelity for robust recovery against outliers. Experiments confirmed competitive recovery performance and high outlier resilience.


\begin{spacing}{1.0}
    \bibliographystyle{plain}
    \bibliography{refs}

\begin{thebibliography}{10}

\bibitem{balatsoukas-stimmingDeepUnfoldingCommunications2019}
Alexios {Balatsoukas-Stimming} and Christoph Studer.
\newblock Deep {{Unfolding}} for {{Communications Systems}}: {{A Survey}} and {{Some New Directions}}.
\newblock In {\em IEEE International Workshop on Signal Processing Systems (SiPS)}, pages 266--271, 2019.

\bibitem{baraniukModelBasedCompressiveSensing2010}
Richard~G. Baraniuk, Volkan Cevher, Marco~F. Duarte, and Chinmay Hegde.
\newblock Model-{{Based Compressive Sensing}}.
\newblock {\em IEEE Transactions on Information Theory}, 56(4):1982--2001, 2010.

\bibitem{bauschkeRealRootsReal2023}
Heinz~H. Bauschke, Manish~Krishan Lal, and Xianfu Wang.
\newblock Real roots of real cubics and optimization.
\newblock arXiv:2302.10731, 2023.

\bibitem{bodardEscapingSaddlePoints2025}
Alexander Bodard and Panagiotis Patrinos.
\newblock Escaping saddle points without {{Lipschitz}} smoothness: The power of nonlinear preconditioning.
\newblock In {\em Annual Conference on Neural Information Processing Systems (NeurIPS)}, 2025.

\bibitem{borgerdingAMPInspiredDeepNetworks2017}
Mark Borgerding, Philip Schniter, and Sundeep Rangan.
\newblock {{AMP-Inspired Deep Networks}} for {{Sparse Linear Inverse Problems}}.
\newblock {\em IEEE Transactions on Signal Processing}, 65(16):4293--4308, 2017.

\bibitem{carrilloRobustSamplingReconstruction2010}
Rafael~E. Carrillo, Kenneth~E. Barner, and Tuncer~C. Aysal.
\newblock Robust {{Sampling}} and {{Reconstruction Methods}} for {{Sparse Signals}} in the {{Presence}} of {{Impulsive Noise}}.
\newblock {\em IEEE Journal of Selected Topics in Signal Processing}, 4(2):392--408, 2010.

\bibitem{combettesPerspectiveFunctionsProximal2018}
Patrick~L. Combettes and Christian~L. M{\"u}ller.
\newblock Perspective functions: {{Proximal}} calculus and applications in high-dimensional statistics.
\newblock {\em Journal of Mathematical Analysis and Applications}, 457(2):1283--1306, 2018.

\bibitem{eldarBlocksparseSignalsUncertainty2010}
Yonina~C. Eldar, Patrick Kuppinger, and Helmut Bolcskei.
\newblock Block-sparse signals: {{Uncertainty}} relations and efficient recovery.
\newblock {\em IEEE Transactions on Signal Processing}, 58(6):3042--3054, 2010.

\bibitem{fangPatterncoupledSparseBayesian2015}
Jun Fang, Yanning Shen, Hongbin Li, and Pu~Wang.
\newblock Pattern-coupled sparse bayesian learning for recovery of block-sparse signals.
\newblock {\em IEEE Transactions on Signal Processing}, 63(2):360--372, 2015.

\bibitem{fuDeepUnfoldingNetwork2021}
Rong Fu, Vincent Monardo, Tianyao Huang, and Yimin Liu.
\newblock Deep {{Unfolding Network}} for {{Block-Sparse Signal Recovery}}.
\newblock In {\em IEEE International Conference on Acoustics, Speech and Signal Processing (ICASSP)}, pages 2880--2884, 2021.

\bibitem{furuhashiWEEPDifferentiableNonconvex2026}
Takanobu Furuhashi, Hidekata Hontani, Qibin Zhao, and Tatsuya Yokota.
\newblock {{WEEP}}: {{A Differentiable Nonconvex Sparse Regularizer}} via {{Weakly-Convex Envelope}}.
\newblock In {\em IEEE International Conference on Acoustics, Speech and Signal Processing (ICASSP)}, 2026.

\bibitem{gaoDeepLearningBasedChannel2023}
Jiabao Gao, Caijun Zhong, Geoffrey~Ye Li, Joseph~B. Soriaga, and Arash Behboodi.
\newblock Deep {{Learning-Based Channel Estimation}} for {{Wideband Hybrid MmWave Massive MIMO}}.
\newblock {\em IEEE Transactions on Communications}, 71(6):3679--3693, 2023.

\bibitem{gaoBlockSparseRPCASalient2014}
Zhi Gao, Loong-Fah Cheong, and Yu-Xiang Wang.
\newblock Block-{{Sparse RPCA}} for {{Salient Motion Detection}}.
\newblock {\em IEEE Transactions on Pattern Analysis and Machine Intelligence}, 36(10):1975--1987, 2014.

\bibitem{goldbergWhatEveryComputer1991}
David Goldberg.
\newblock What every computer scientist should know about floating-point arithmetic.
\newblock {\em ACM Comput. Surv.}, 23(1):5--48, 1991.

\bibitem{gregorLearningFastApproximations2010}
Karol Gregor and Yann LeCun.
\newblock Learning fast approximations of sparse coding.
\newblock In {\em International Conference on Machine Learning (ICML)}, pages 399--406, 2010.

\bibitem{hersheyDeepUnfoldingModelBased2014}
John~R. Hershey, Jonathan~Le Roux, and Felix Weninger.
\newblock Deep {{Unfolding}}: {{Model-Based Inspiration}} of {{Novel Deep Architectures}}.
\newblock arXiv:1409.2574, 2014.

\bibitem{huberRobustEstimationLocation1964}
Peter~J. Huber.
\newblock Robust {{Estimation}} of a {{Location Parameter}}.
\newblock {\em The Annals of Mathematical Statistics}, 35(1):73--101, 1964.

\bibitem{itoTrainableISTASparse2019}
Daisuke Ito, Satoshi Takabe, and Tadashi Wadayama.
\newblock Trainable {{ISTA}} for {{Sparse Signal Recovery}}.
\newblock {\em IEEE Transactions on Signal Processing}, 67(12):3113--3125, 2019.

\bibitem{kolbDifferentiableSparsityDGating2025}
Chris Kolb, Laetitia Frost, Bernd Bischl, and David R{\"u}gamer.
\newblock Differentiable {{Sparsity}} via {{D-Gating}}: {{Simple}} and {{Versatile Structured Penalization}}.
\newblock In {\em Annual Conference on Neural Information Processing Systems (NeurIPS)}, 2025.

\bibitem{kolbSmoothingEdgesSmooth2026}
Chris Kolb, Christian~L. M{\"u}ller, Bernd Bischl, and David R{\"u}gamer.
\newblock Smoothing the {{Edges}}: {{Smooth Optimization}} for {{Sparse Regularization Using Hadamard Overparametrization}}.
\newblock {\em Machine Learning}, 115(4):87, 2026.

\bibitem{kolbDeepWeightFactorization2024}
Chris Kolb, Tobias Weber, Bernd Bischl, and David R{\"u}gamer.
\newblock Deep {{Weight Factorization}}: {{Sparse Learning Through}} the {{Lens}} of {{Artificial Symmetries}}.
\newblock In {\em International Conference on Learning Representations (ICLR)}, 2024.

\bibitem{kurodaConvexnonconvexFrameworkEnhancing2025}
Hiroki Kuroda.
\newblock A convex-nonconvex framework for enhancing minimization induced penalties.
\newblock {\em Journal of the Franklin Institute}, 362(15):107969, 2025.

\bibitem{kurodaTheoreticalValidationLatent2025}
Hiroki Kuroda, Renato Luis~Garrido Cavalcante, and Masahiro Yukawa.
\newblock Theoretical {{Validation}} of the {{Latent Optimally Partitioned-L2}}/{{L1 Penalty}} with {{Application}} to {{Angular Power Spectrum Estimation}}, 2025.

\bibitem{9729560}
Hiroki Kuroda and Daichi Kitahara.
\newblock Block-sparse recovery with optimal block partition.
\newblock {\em IEEE Transactions on Signal Processing}, 70:1506--1520, 2022.

\bibitem{liStructuredSparseCoding2022}
Zhenni Li, Yujie Li, Benying Tan, Shuxue Ding, and Shengli Xie.
\newblock Structured {{Sparse Coding With}} the {{Group Log-regularizer}} for {{Key Frame Extraction}}.
\newblock {\em IEEE/CAA Journal of Automatica Sinica}, 9(10):1818--1830, 2022.

\bibitem{liuBackgroundSubtractionBased2015}
Xin Liu, Guoying Zhao, Jiawen Yao, and Chun Qi.
\newblock Background {{Subtraction Based}} on {{Low-Rank}} and {{Structured Sparse Decomposition}}.
\newblock {\em IEEE Transactions on Image Processing}, 24(8):2502--2514, 2015.

\bibitem{lvGroupLassoStable2011}
Xiaolei Lv, Guoan Bi, and Chunru Wan.
\newblock The {{Group Lasso}} for {{Stable Recovery}} of {{Block-Sparse Signal Representations}}.
\newblock {\em IEEE Transactions on Signal Processing}, 59(4):1371--1382, 2011.

\bibitem{mongaAlgorithmUnrollingInterpretable2021}
Vishal Monga, Yuelong Li, and Yonina~C. Eldar.
\newblock Algorithm {{Unrolling}}: {{Interpretable}}, {{Efficient Deep Learning}} for {{Signal}} and {{Image Processing}}.
\newblock {\em IEEE Signal Processing Magazine}, 38(2):18--44, 2021.

\bibitem{qianHyperspectralImageryRestoration2013}
Yuntao Qian and Minchao Ye.
\newblock Hyperspectral {{Imagery Restoration Using Nonlocal Spectral-Spatial Structured Sparse Representation With Noise Estimation}}.
\newblock {\em IEEE Journal of Selected Topics in Applied Earth Observations and Remote Sensing}, 6(2):499--515, 2013.

\bibitem{santBlockSparseSignalRecovery2022}
Aditya Sant, Markus Leinonen, and Bhaskar~D. Rao.
\newblock Block-{{Sparse Signal Recovery}} via {{General Total Variation Regularized Sparse Bayesian Learning}}.
\newblock {\em IEEE Transactions on Signal Processing}, 70:1056--1071, 2022.

\bibitem{shlezingerDeepUnfoldingRecent2025}
Nir Shlezinger, Santiago Segarra, Yi~Zhang, Dvir Avrahami, Zohar Davidov, Tirza Routtenberg, and Yonina~C. Eldar.
\newblock Deep {{Unfolding}}: {{Recent Developments}}, {{Theory}}, and {{Design Guidelines}}.
\newblock arXiv:2512.03768, 2025.

\bibitem{stojnicReconstructionBlockSparseSignals2009}
Mihailo Stojnic, Farzad Parvaresh, and Babak Hassibi.
\newblock On the {{Reconstruction}} of {{Block-Sparse Signals With}} an {{Optimal Number}} of {{Measurements}}.
\newblock {\em IEEE Transactions on Signal Processing}, 57(8):3075--3085, 2009.

\bibitem{studerRecoverySparselyCorrupted2012}
Christoph Studer, Patrick Kuppinger, Graeme Pope, and Helmut Bolcskei.
\newblock Recovery of sparsely corrupted signals.
\newblock {\em IEEE Transactions on Information Theory}, 58(5):3115--3130, 2012.

\bibitem{wanRobustBayesianCompressed2017}
Qian Wan, Huiping Duan, Jun Fang, Hongbin Li, and Zhengli Xing.
\newblock Robust {{Bayesian}} compressed sensing with outliers.
\newblock {\em Signal Processing}, 140:104--109, 2017.

\bibitem{wangEnhancedISARImaging2014}
Lu~Wang, Lifan Zhao, Guoan Bi, Chunru Wan, and Lei Yang.
\newblock Enhanced {{ISAR Imaging}} by {{Exploiting}} the {{Continuity}} of the {{Target Scene}}.
\newblock {\em IEEE Transactions on Geoscience and Remote Sensing}, 52(9):5736--5750, 2014.

\bibitem{wuRPCANetDeepUnfolding2024}
Fengyi Wu, Tianfang Zhang, Lei Li, Yian Huang, and Zhenming Peng.
\newblock {{RPCANet}}: {{Deep Unfolding RPCA Based Infrared Small Target Detection}}.
\newblock In {\em IEEE/CVF Winter Conference on Applications of Computer Vision (WACV)}, pages 4797--4806, 2024.

\bibitem{yuanL0TVNewMethod2015}
Ganzhao Yuan and Bernard Ghanem.
\newblock {{$\ell$0TV}}: {{A}} new method for image restoration in the presence of impulse noise.
\newblock In {\em IEEE/CVF Conference on Computer Vision and Pattern Recognition (CVPR)}, pages 5369--5377, 2015.

\bibitem{yuanModelSelectionEstimation2006}
Ming Yuan and Yi~Lin.
\newblock Model selection and estimation in regression with grouped variables.
\newblock {\em Journal of the Royal Statistical Society: Series B (Statistical Methodology)}, 68(1):49--67, 2006.

\bibitem{zhangImprovedAnalysisClipping2020}
Bohang Zhang, Jikai Jin, Cong Fang, and Liwei Wang.
\newblock Improved analysis of clipping algorithms for non-convex optimization.
\newblock In {\em Annual Conference on Neural Information Processing Systems (NeurIPS)}, pages 15511--15521, 2020.

\bibitem{zhangISTANetInterpretableOptimizationInspired2018}
Jian Zhang and Bernard Ghanem.
\newblock {{ISTA-Net}}: {{Interpretable Optimization-Inspired Deep Network}} for {{Image Compressive Sensing}}.
\newblock In {\em IEEE/CVF Conference on Computer Vision and Pattern Recognition (CVPR)}, pages 1828--1837, 2018.

\bibitem{zhangWhyGradientClipping2020}
Jingzhao Zhang, Tianxing He, Suvrit Sra, and Ali Jadbabaie.
\newblock Why gradient clipping accelerates training: {{A}} theoretical justification for adaptivity.
\newblock arXiv:1905.13655, 2020.

\bibitem{zhangExtensionSBLAlgorithms2013}
Zhilin Zhang and Bhaskar~D. Rao.
\newblock Extension of {{SBL Algorithms}} for the {{Recovery}} of {{Block Sparse Signals With Intra-Block Correlation}}.
\newblock {\em IEEE Transactions on Signal Processing}, 61(8):2009--2015, 2013.

\bibitem{zhaoHyperspectralAnomalyDetection2023}
Yin-Ping Zhao, Hongyan Li, Yongyong Chen, Zhen Wang, and Xuelong Li.
\newblock Hyperspectral {{Anomaly Detection}} via {{Structured Sparsity Plus Enhanced Low-Rankness}}.
\newblock {\em IEEE Transactions on Geoscience and Remote Sensing}, 61:1--15, 2023.

\end{thebibliography}
\end{spacing}

\end{document}